\def\eqref#1{equation~\ref{#1}}
\def\1{\bm{1}}
\DeclareMathAlphabet{\mathsfit}{\encodingdefault}{\sfdefault}{m}{sl}
\SetMathAlphabet{\mathsfit}{bold}{\encodingdefault}{\sfdefault}{bx}{n}
\newcommand{\R}{\mathbb{R}}
\patchcmd{\@footnotetext}{\Hy@raisedlink{\hyper@anchorstart{footnote\@thefnmark}\hyper@anchorend}}{}{}{}
\renewcommand{\And}{\end{tabular}\hspace{0.75em}\begin{tabular}[t]{c}}
\theoremstyle{plain}
\newtheorem{theorem}{Theorem}[section]
\newtheorem{lemma}[theorem]{Lemma}
\theoremstyle{definition}
\theoremstyle{remark}
\definecolor{coolpurple}{rgb}{0.2, 0.6, 0.2}
\definecolor{darkgreen}{rgb}{0.0, 0.6, 0.0}
\newcommand{\blfootnote}[1]{%
  \begingroup
  \renewcommand\thefootnote{}\footnotetext{#1}%
  \endgroup
}
\title{$\mu$-Parameterization for Mixture of Experts}
\author{
Jan Małaśnicki $^*$ \\ University of Warsaw \\
\And
Kamil Ciebiera $^*$ \\ University of Warsaw \\
\And
Mateusz Boruń \\University of Warsaw \\Syntro \\
\AND
Maciej Pióro \\IPPT PAN $^1$ \\University of Warsaw \\IDEAS NCBR \\
\And
Jan Ludziejewski \\University of Warsaw \\IDEAS NCBR \\
\And
Maciej Stefaniak \\University of Warsaw \\
\AND
Michał Krutul \\University of Warsaw \\IDEAS NCBR \\
\And
Sebastian Jaszczur \\University of Warsaw \\IDEAS NCBR \\
\And
Marek Cygan \\University of Warsaw \\Nomagic \\
\AND
Kamil Adamczewski \\IDEAS NCBR \\Wrocław University $^2$ \\
\And
Jakub Krajewski \\
University of Warsaw \\IDEAS NCBR \\
\AND
}
\begin{document}

\maketitle

\blfootnote{$^{1}$ Institute of Fundamental Technological Research, Polish Academy of Sciences,
    $^{2}$ Wroclaw University of Science and Technology,
    Correspondence to:
    Jan Małaśnicki <jan.malasnicki@gmail.com>,
    Kamil Ciebiera <kp.ciebiera@student.uw.edu.pl>,
    Jakub Krajewski <gim.jakubk@gmail.com>
}

\vspace{-1cm}
\begin{abstract}
Recent years have seen a growing interest and adoption of LLMs, with Mixture-of-Experts (MoE) emerging as a leading architecture in extremely large models. Currently, the largest open-source models reach over $1$T parameters. At such scales, hyperparameter tuning becomes prohibitively expensive. Precisely for this reason, the $\mu$Transfer is becoming a key technique. It allows for seamless transfer of optimal hyperparameters across model scales, resulting in a huge reduction in tuning costs. However, existing work has primarily focused on dense LLMs, leaving MoE architectures unexplored. In this work, we derive a $\mu$-Parameterization for MoE, providing theoretical guarantees for feature learning across model widths. Our experiments demonstrate that the optimal learning rate reliably transfers across model sizes, establishing a foundation for efficient hyperparameter tuning in large-scale MoE models.
\end{abstract}

\section{Introduction}
\label{s:intro}

Scaling large language models (LLMs) has been a central driver of recent progress in AI. Scaling laws show that larger models trained for longer consistently achieve better performance. This naturally places us in a regime where computational resources are perpetually the limiting factor. In such settings, the efficiency of hyperparameter tuning is as important as the efficiency of the final training run, since both draw from the same compute budget.

The $\mu$-Parameterization ($\mu$P) \citep{yang2021feature} provides a principled framework for stable and predictable training dynamics across model widths. By reparameterizing models to preserve feature learning in the infinite-width limit, $\mu$P ensures that optimal hyperparameters (e.g., learning rates, initialization scales) remain invariant to model width. This allows hyperparameter searches to be conducted on small models, saving most of the computational resources for the final training run.

In parallel, Mixture-of-Experts (MoE) has become one of the most widely adopted architectural innovations for improving the model efficiency \citep{deepseekai2025deepseekv3technicalreport, openai2025gptoss120bgptoss20bmodel, kimiteam2025kimik2openagentic}. MoE decouples parameter count from computational cost through conditional computation, replacing each dense feedforward block with a router and a set of expert networks. Despite its popularity and effectiveness, MoE has not yet been parameterized for feature learning in the infinite-width limit, leaving open the question of whether $\mu$P can be extended to sparse architectures of this kind. 

In this work, we extend $\mu$P to MoE architectures, providing both theoretical grounding and empirical validation. Our main contributions are:

\begin{itemize}
\setlength{\itemsep}{4pt}
\setlength{\parskip}{0pt}
\setlength{\parsep}{0pt}



\item \textbf{Parameterization with learning rate transfer in MoE models.} We find a parameterization for MoE models that transfers the optimal learning rate across model widths. We empirically verify this transferability.

\item \textbf{Theoretical grounding of $\mu$P for MoE.} Building on \citet{yang2022tuning}, we provide a theoretically principled derivation of a parameterization that ensures feature learning is preserved across all weights within MoE models.

\item \textbf{Limits of transferability.} We show that while learning rate transfer holds across widths, it breaks down when varying the granularity of MoE, identifying boundaries for hyperparameter transfer in MoE.

\end{itemize}

\section{Background and Related Work}
\label{s:rel}

\textbf{Mixture of Experts.} Mixture of Experts was originally introduced by \citep{moe1991}, and later proposed in the context of language modeling by \citep{shazeer2017outrageously}. This approach has since been successfully integrated into the Transformer \citep{vaswani2017attention} architecture in multiple works, including \citep{fedus2022switch, lepikhin2020gshard, du2022glam, zhou2022mixtureofexperts}. 
In a Transformer model, the MoE layer is typically constructed by replacing the Feed-Forward component with a set of \textit{experts}. Each expert retains the design of the original Feed-Forward layer, consisting of two linear layers with a non-linearity between them. Crucially, for any given input token, only a fraction of these experts are activated. The selection of experts for each token is determined by a routing mechanism - a simple linear layer followed by a softmax normalization and a Top-$k$ choice.

In the standard Switch layer \citep{fedus2022switch}, each of the experts is of the same size as in the corresponding dense Transformer. This assumption is relaxed in fine-grained MoE \citep{dai2024deepseekmoe, ludziejewskiscaling}, where for granularity $G$, the hidden size of each expert is reduced by a factor of $G$, while the number of experts and the router's top-$k$ value are both multiplied by $G$. In this way, the model has greater flexibility in mapping tokens to experts, while the total number and activated parameters remain approximately constant.

\textbf{Zero-shot hyperparameter transfer.} Standard Parametrization (SP) initializes weights with the usual variance scaling (like Xavier or Kaiming style: variance $\propto 1/d_{\text{in}}$ for many layers). SP learning rates do not scale with width, which means that as the model width increases, gradient magnitudes and update sizes can change unpredictably. Thus, SP often fails to preserve stability and hyperparameter transfer in scaling neural networks. To overcome this limitation, \citep{yang2021feature} introduced Maximal Update Parameterization ($\mu$-Parameterization, or $\mu$P). $\mu$P ensures that each layer in a network receives updates of the same order of magnitude during training, regardless of width. This allows for what is known as the feature learning regime, where internal representations evolve in a meaningful way as training progresses. Crucially, $\mu$P enables hyperparameter transfer across model sizes: one can tune learning rates and initialization on a small model and zero-shot transfer them to a large one, as shown empirically and theoretically in \citet{yang2022tuning}. This paradigm, called $\mu$Transfer, has been shown to dramatically reduce the cost of training large models while maintaining performance.
Later works \citep{spectral_condition, scaling_exponents} reformulate and generalize $\mu$P theory, while \citet{completeP} and \citet{tp_6} include transfer across model depths.  
Despite its success on many architectures such as Transformers and ResNets, extending $\mu$P to Mixture-of-Experts models remains an open challenge. In this work, we address this problem.

\begin{table*}
    \centering
    \begin{tabular}{l|ccc}
    \toprule
    \textbf{Component} & \textbf{Init. Var.} & \textbf{Multiplier} & \textbf{LR (Adam)} \\
    \midrule
    Embedding & \textcolor{gray}{$1.0$} & \textcolor{gray}{$1.0$} & \textcolor{gray}{$1.0$} \\
    Unembedding & \textcolor{blue}{$1.0$} & \textcolor{blue}{$1/d_\text{input}$} & \textcolor{gray}{$1.0$} \\
    \makecell[l]{Attention \\(Q, K, V, O)} & \textcolor{gray}{$1/d_\text{input}$} & \textcolor{gray}{$1.0$} & \textcolor{blue}{$1/d_\text{input}$} \\
    \makecell[l]{Feed-forward \\(dense)} & \textcolor{gray}{$1/d_\text{input}$} & \textcolor{gray}{$1.0$} & \textcolor{blue}{$1/d_\text{input}$} \\
    \midrule
    Experts (MoE) & \textcolor{gray}{$1/d_\text{input}$} & \textcolor{gray}{$1.0$} & \textcolor{red}{$1/d_\text{input}$} \textcolor{gray}{$|$} \textcolor{darkgreen}{$1/d_\text{input}$} \\
    Router (MoE) & \textcolor{gray}{$1/d_\text{input}$} \textcolor{gray}{$|$} \textcolor{darkgreen}{$1.0$} & \textcolor{gray}{$1.0$} \textcolor{gray}{$|$} \textcolor{darkgreen}{$1/d_\text{input}$} & \textcolor{gray}{$1.0$} \\
    \bottomrule
    \end{tabular}
    \vspace{0.1cm}
    \caption{The table presents parameterizations of dense and MoE Transformers, showing parameter scaling in big-$\Theta$ notation, $d_\text{input}$ is the dimensionality of the weight input. Dense transformer $\mu$P is indicated in \textcolor{blue}{blue}. MoE parameterizations build on dense $\mu$P. \textit{simple}P MoE is heuristics marked in \textcolor{red}{red}, while the full-fledged theoretically grounded $\mu$P MoE is shown in \textcolor{darkgreen}{green}.}
    \label{tab:simple_parameterization}
    \vspace{0.1cm}
\end{table*}

\section{Principled approach for $\mu$P for MoE}\label{sec:mup_for_moe}

In this section, we analyze the training dynamics of Mixture-of-Experts (MoE) \citep{fedus2022switch} and derive a parameterization that ensures feature learning across all weights in the infinite width limit.

\subsection{Definitions and notation}

We scale the model width, with the number of experts and top-$k$ kept fixed. The hidden dimension of each expert grows proportionally with the model width, so that the ratio between them remains constant. We model the MoE layer as a Switch Transformer \citep{fedus2022switch}, which consists of:

\begin{itemize}
\vspace{-\topsep}
    \item A router matrix $R_0 \in \mathbb{R}^{n_\text{experts} \times n}$,
    \item Two weights of an MLP expert $E$: 
    $
        E_1 \in \mathbb{R}^{n_\text{experts} \times 4n \times n}, \quad
        E_2 \in \mathbb{R}^{n_\text{experts} \times n \times 4n}.
    $
\vspace{-\topsep}
\end{itemize}

Here $n$ denotes the model width, which gets scaled to infinity.

The forward pass of the MoE layer is computed as:
\begin{align*}
    \mathrm{MoE}(x) &= E(x)^T R(x).\\
    E(x) &= E_2  \mathrm{ReLU}(E_1 x),\\
    R(x) &= \mathrm{top\text{-}k}(\mathrm{softmax}(R_0x)).
\end{align*}

We adopt the notation of \citet{yang2022tuning}. A vector $v \in \R^n$ is said to be $\Theta(n^a)$ if 
\[
\frac{\|v\|^2}{n} = \Theta(n^{2a}),
\]
where $\|\cdot\|$ is the Euclidean norm. Intuitively, this means that a typical entry of $v$ has magnitude $\Theta(n^a)$. Analogous definitions apply for $O(n^a)$ and $\Omega(n^a)$, and extend naturally to matrices.

All layers outside of MoE blocks are assumed to follow the TP5 parameterization.

\subsection{Intuition}

TP5 categorizes weight matrices into three types based on their input and output dimensions. Sufficient conditions are:
\begin{itemize}\label{def:weight_conditions}
    \item \textbf{Input weight:} maps fixed-size $\to$ unbounded; initialized as $\Theta(1)$; gradient updates of size $\Theta(1)$.
    \item \textbf{Hidden weight:} maps unbounded $\to$ unbounded; initialized as $\Theta(1)$; gradient updates of size $\Theta(1/n)$.
    \item \textbf{Output weight:} maps unbounded $\to$ fixed-size; initialized as $\Theta(1)$; gradient updates of size $\Theta(1)$.
\end{itemize}

By inspection, the expert weights $(E_1,E_2)$ map unbounded to unbounded dimensions, suggesting they should behave as \textit{hidden weights}. The router weight $R_0$ maps from unbounded to fixed-size, making it a candidate \textit{output weight}. We verify this classification in Section~\ref{sub:derivation} by analyzing initialization and gradient scaling.

\subsection{Desiderata}\label{sub:desiderata}

Following TP5, a $\mu$-parameterized model must satisfy:
\vspace{0.2cm}
\begin{enumerate}
\setlength{\itemsep}{3pt}
\setlength{\parskip}{0pt}
\setlength{\parsep}{0pt}
    \item At initialization, all hidden representations $h(x)$ in the network should scale as $\Theta(1)$. 
    \item The model output logits $f(x)$ should be $O(1)$ at initialization.
    \item After one optimization step, the changes to hidden representations $\Delta h(x)$ and output logits $\Delta f(x)$ should be $\Theta(1)$.
\end{enumerate}

These conditions guarantee stable layer weights and outputs across model widths, ensuring feature learning \citep{yang2022tuning}.

\begin{figure*}
    \centering
    \includegraphics[width=\linewidth, trim=3.5cm 0 3.5cm 1cm, clip]{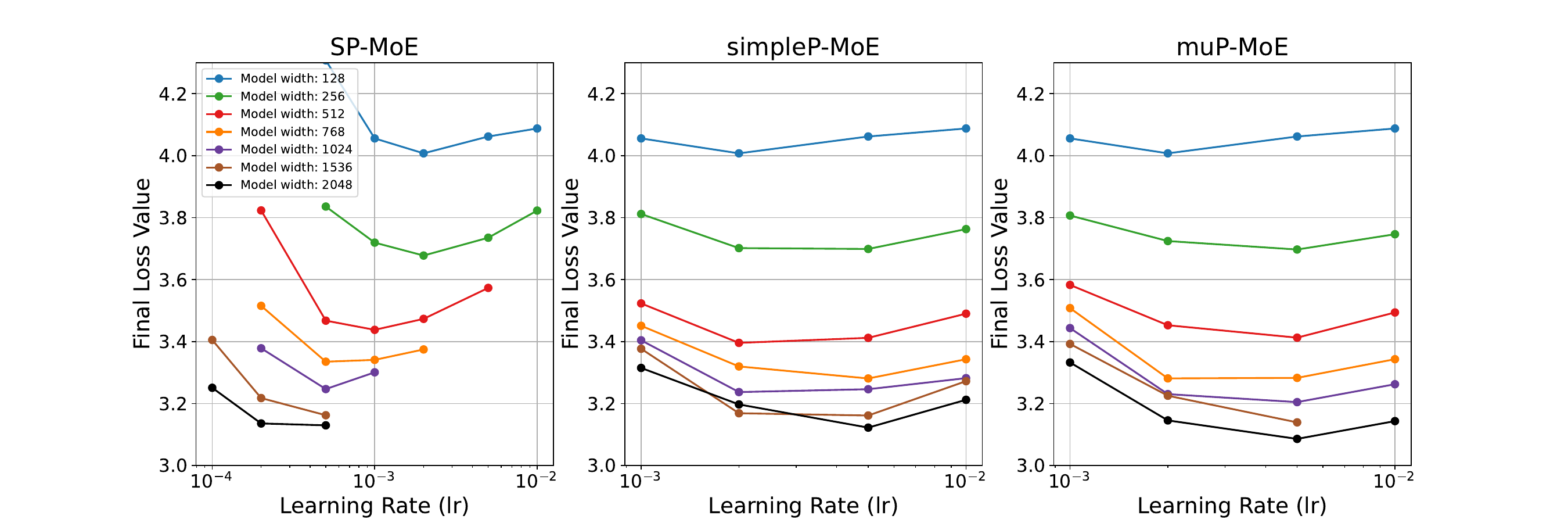}
    \caption{Learning rate transfer in MoE. \textbf{Left:} Standard Parameterization (SP). \textbf{Middle:} Our \textit{simple}P, treating each expert as a feed-forward layer. \textbf{Right:} Our $\mu$P for MoE with both router and expert reparametrization. Under SP, the optimal learning rate depends strongly on width, while both reparameterizations achieve transfer across widths.}
     \label{fig:lr_plot_moe}
\end{figure*}

\subsection{Derivation}\label{sub:derivation}
In this section, we use the parameterization from Table~\ref{tab:tp5_parameterization}. It is analogous to the parameterization from Table~\ref{tab:simple_parameterization}, on which we performed our experiments (See \citet{yang2022tuning} Appendix B).

\begin{table*}
    \centering
    \begin{tabular}{l|cc}
    \toprule
    \textbf{Weight type} & \textbf{Init. Var.} & \textbf{LR (Adam)} \\
    \midrule
    Input weight & \textcolor{gray}{$1/d_\text{input}$} & \textcolor{gray}{$1.0$} \\
    Output weight & \textcolor{blue}{$1/d_\text{input}^2$} & \textcolor{blue}{$1/d_\text{input}$} \\
    Hidden weight & \textcolor{gray}{$1/d_\text{input}$} & \textcolor{blue}{$1/d_\text{input}$}\\
    \bottomrule
    \end{tabular}
    \vspace{0.1cm}
    \caption{Parameterization from TP5 used in our derivation. Changes introduced by $\mu$P are highlighted in \textcolor{blue}{blue}; quantities unchanged from SP are shown in \textcolor{gray}{gray}.}
    \label{tab:tp5_parameterization}
    \vspace{0.1cm}
\end{table*}

\subsubsection{Initialization}
If we initialize $E_1, E_2$ as in a dense MLP, and set the router using the same variance and scaling factor as the output layer in TP5, we obtain:

If $E_1,E_2$ are initialized like \textit{hidden weights}, and $R_0$ with the variance of an \textit{output weight} (See Table~\ref{tab:tp5_parameterization}), thus we obtain:

\vspace{0.2cm}
\begin{itemize}
  \setlength\itemsep{1pt}
  \setlength\parskip{1pt}
    \setlength\parsep{0pt}
  \setlength\topsep{0pt}
  \setlength\partopsep{0pt}
  \item $E(x) = \Theta(1)$
  \item $R_0x = O(1)$
  \item $R(x) = \Theta(1)$
  \item $\mathrm{MoE}(x) = \Theta(1)$
\end{itemize}
\vspace{0.2cm}

The term $R_0x$ is non-standard, but it does not affect the overall output scale, which remains $\Theta(1)$.

\subsubsection{Optimizer step}


Assuming all other modules satisfy the desiderata in Section~\ref{sub:desiderata}, gradients with respect to hidden-layer outputs are $\Theta(1/n)$. For MoE, this implies:
\[
\nabla \mathrm{MoE}(x) = \Theta(1/n).
\]

Now we can calculate $\nabla E(x)$:
\begin{equation}
\begin{aligned}
\nabla E(x) &= R(x)  \nabla \mathrm{MoE}(x)^T \\
&=\Theta(1) \cdot \Theta(1/n) = \Theta(1/n)
\end{aligned}
\end{equation}

Here, the summation is over a finite number of experts, 
so it does not change the asymptotic scaling, i.e., the $\Theta(\cdot)$ order remains the same. 
Since $R(x)$ and $\mathrm{MoE}(x)$ are 1-dimensional vectors, 
each entry of $\nabla E(x)$ is simply the product of the corresponding entry sizes.

Similarly calculate $\nabla R(x)$:

\begin{equation}
\begin{aligned}
\nabla R(x) &= E(x)  \nabla \mathrm{MoE}(x) \\
            &= \Theta(1) \cdot \Theta(1/n) \cdot n = \Theta(1)
\end{aligned}
\end{equation}
\vspace{0.3cm}


The multiplication by $n$ in this equality follows from $E(x)$ and $\nabla \mathrm{MoE}(x)$ being correlated. We prove that correlation in Appendix~\ref{sec:eg-cov}. In short, for correlated vectors $v, u \in \R^n$ quantity $v^T u$ has expected size $\Theta(v)\Theta(u)\cdot \text{corr}(v, u) \cdot n$, which follows from the Law of Large Numbers.

$\nabla E_1x, \nabla E_2\text{ReLU}(E_1x)$ are $\Theta(1/n)$ since they mimic standard MLP layers.
The Softmax and top-k over constant k do not change the size of the gradient
For $R(x)$, since $R_0x$ is $O(1)$, the gradient over $R_0$ is still $\Theta(1)$.
That means $E_1, E_2, R_0$ receive the same gradient sizes as hidden weights and output weight, respectively, so they behave in the same way in training as their weight types from TP5 

In conclusion, $E_1, E_2$ should be parameterized as \textit{hidden weights}, while $R_0$ should be treated as an \textit{output weight}.

\section{Experimental Results and Alternative Views on Hyperparameter Transfer in MoE}
\label{s:exp}

This section presents experimental results on learning rate transfer in MoE Transformers, providing empirical validation of our parameterization. Additionally, we explore the transferability of learning rate when scaling different MoE dimensions.

\subsection{Alternative parameterization for MoE models}
In Section~\ref{sec:mup_for_moe}, we develop a theory for $\mu$P for MoE where we re-parametrize both router and experts. In addition to this theoretically grounded scheme, we also evaluate a simplified parameterization, which we call \textit{simple}-Parameterization (\textit{simple}P). \textit{simple}P applies the $\mu$P rules of dense Transformers directly to each expert, leveraging the structural similarity between experts and Transformer MLP blocks. The router, however, is left unmodified. Table~\ref{tab:simple_parameterization} summarizes both parameterizations. Importantly, \textit{simple}P is \emph{not} a $\mu$-Parameterization, as it lacks theoretical guarantees for feature learning.

\subsection{Model width}
We evaluate the transferability of the learning rate across model widths (Figure~\ref{fig:lr_plot_moe} and Figure~\ref{fig:lr_plot_moe_small} in Appendix). For both $\mu$P and \textit{simple}P, the optimal learning rate shifts slightly upward as width increases. A similar trend appears in Tensor Programs~5 and in our dense model experiments (Figure~\ref{fig:mup_dense}). We hypothesize that this shift arises from instabilities in architectures with large depth-to-width ratios, which may underperform on high learning rates. A more detailed analysis is left for future work.
Overall, these results support the transferability of learning rates under both $\mu$P and \textit{simple}P.

\subsection{Computational savings in tuning}
To illustrate the efficiency gains from $\mu$Transfer, consider training a model with width $2048$. A full run requires roughly $4 \times 10^{18}$ FLOPs. In contrast, tuning on a smaller model of width $128$ requires only $1.5 \times 10^{16}$ FLOPs, over $250$ times cheaper. Thus, tuning costs become negligible compared to the final training run, underscoring the practical value of $\mu$Transfer.

\subsection{Scaling other MoE dimensions}
In the previous section, we showed MoE parameterizations for varying model widths. In this section, we investigate whether scaling the MoE architecture in two other dimensions necessitates reparameterization.

\textbf{Number of experts.} Increasing the number of experts expands model capacity without increasing computational cost \citep{clark_unified, ludziejewski2025jointmoescalinglaws}. As shown in Figure~\ref{fig:other_moe_aspects}(a), models with more experts achieve lower final loss, while the optimal learning rate shifts slightly downward. The shift is small and comparable to that observed under width scaling, though it may indicate a more fundamental effect. At the scale of our experiments, however, verifying a dedicated parameterization for expert scaling is infeasible, as the results under SP remain within the margin of error. Overall, we find that changing the number of experts does not alter the stability of the under $muP$ for MoE parameter transfer.

\textbf{Granularity.} Granularity, described in Section~\ref{s:rel}, adjusts expert size while keeping computational cost fixed. Increasing granularity within a reasonable limit improves the model performance \citep{ludziejewskiscaling}. It can be used in practice to match hardware constraints \citep{dai2024deepseekmoe}. Figure~\ref{fig:other_moe_aspects}(b) shows that granularity changes break learning rate transfer. We attribute this to changes to scaling $\text{top-}k$ or the reduced hidden dimension of each expert. Since both settings alter router output dimensions, they lie outside the assumptions of our theory. We leave a full theoretical treatment of these results for future work.

\begin{figure}[htbp]
    \centering
    \begin{subfigure}[b]{0.48\linewidth}
        \centering
        \includegraphics[width=\linewidth]{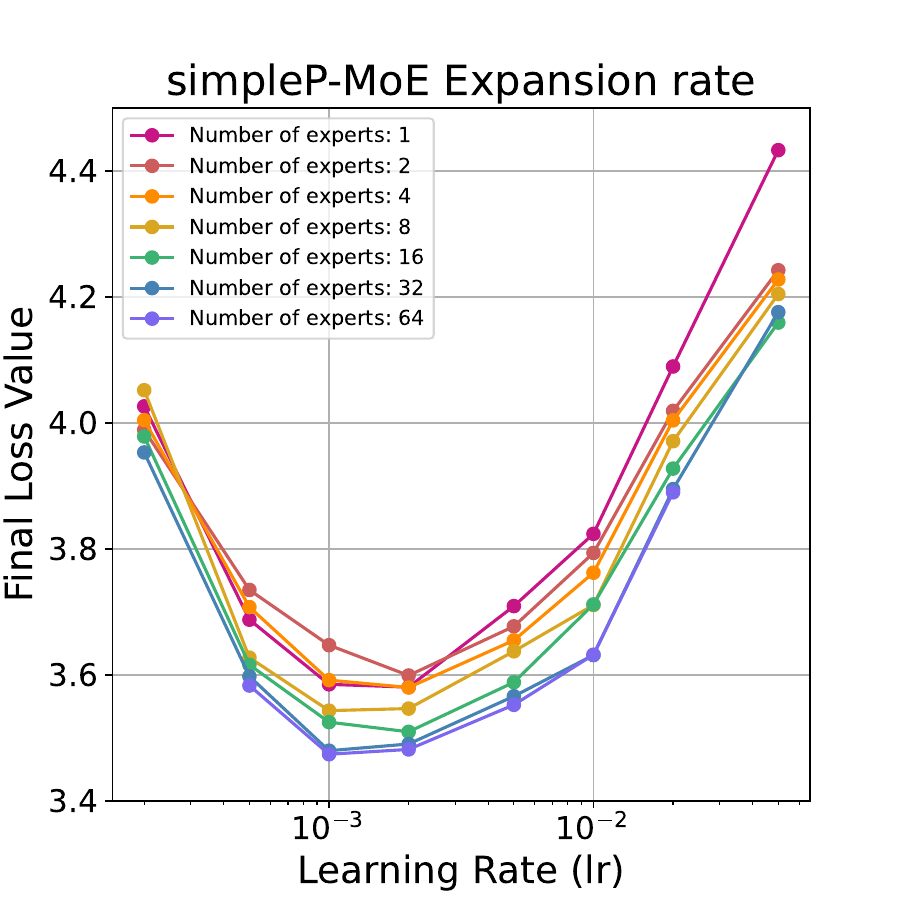}
        \label{fig:image1}
    \end{subfigure}
    \hfill
    \begin{subfigure}[b]{0.48\linewidth}
        \centering
        \includegraphics[width=\linewidth]{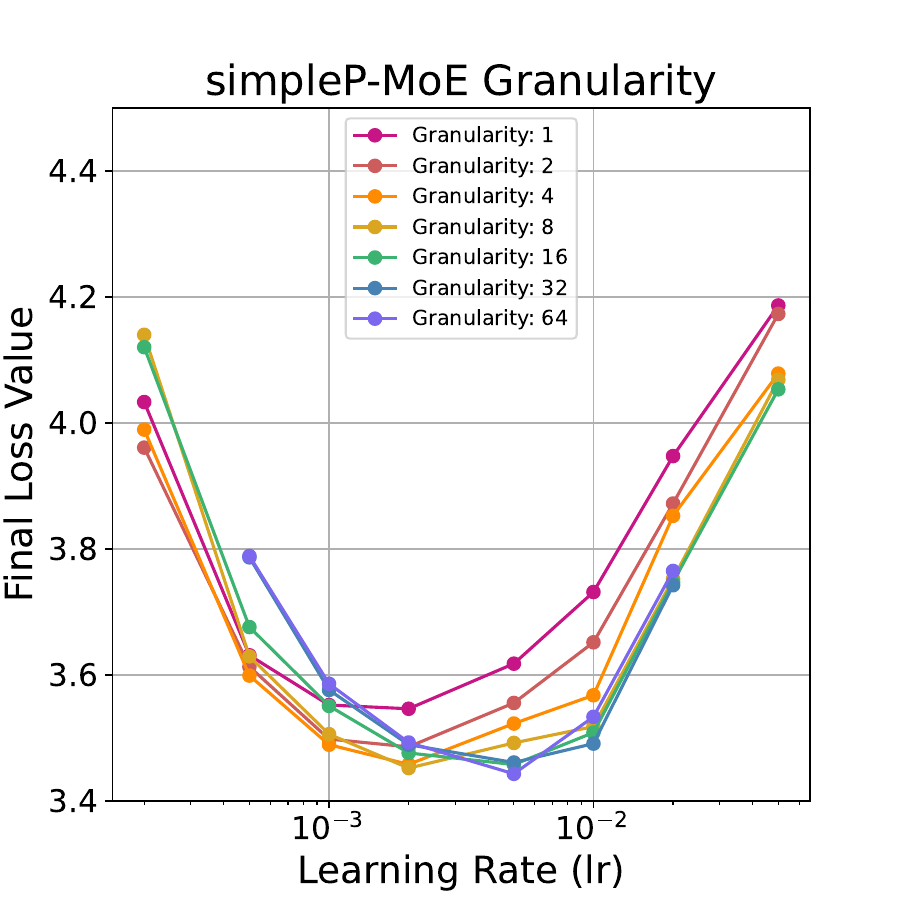}
        \label{fig:image2}
    \end{subfigure}
    \vspace{0.4cm}
    \caption{(a) Varying the number of experts. Given our $\mu$P for MoE, the optimal learning rate is preserved across a varied number of experts. (b) Varying granularity. Learning rate is not preserved across different granularities.}
    \label{fig:other_moe_aspects}
    \vspace{0.3cm}
\end{figure}

\section{Conclusions}
We introduced a $\mu$-Parameterization for Mixture-of-Experts (MoE) that preserves feature learning across widths by classifying expert weights as hidden and the router as output within the TP5 framework. Our theory implies stable gradient scales and width-invariant training dynamics, and our experiments confirm that the optimal learning rate transfers reliably across model sizes under our $\mu$P. Moreover, a simplified expert-only parameterization \textit{simple}P also enables learning rate transfer in the scope of our experiments. 

Beyond width, we examined other MoE scaling axes. Increasing the number of experts improves final loss with only a minor shift in the optimal learning rate, comparable to width scaling. In contrast, changing granularity breaks learning-rate transfer, likely due to altered router behavior and expert hidden sizes—conditions outside our current assumptions. These findings delineate where zero-shot hyperparameter transfer is robust in MoE, and where reparameterization or new theory is needed the most.

\section{Acknowledgments}

We gratefully acknowledge Polish high-performance computing infrastructure PLGrid (HPC Center: ACK Cyfronet AGH) for providing computer facilities and support within computational grants no. PLG/2024/017845 and no. PLG/2024/017060. Part of the experiments utilized computational resources provided by \href{https://writer.com/}{Writer}.

\bibliography{iclr2026_conference}

\begin{thebibliography}{24}
\providecommand{\natexlab}[1]{#1}
\providecommand{\url}[1]{\texttt{#1}}
\expandafter\ifx\csname urlstyle\endcsname\relax
  \providecommand{\doi}[1]{doi: #1}\else
  \providecommand{\doi}{doi: \begingroup \urlstyle{rm}\Url}\fi

\bibitem[Clark et~al.(2022)Clark, De~Las~Casas, Guy, Mensch, Paganini, Hoffmann, Damoc, Hechtman, Cai, Borgeaud, Van Den~Driessche, Rutherford, Hennigan, Johnson, Cassirer, Jones, Buchatskaya, Budden, Sifre, Osindero, Vinyals, Ranzato, Rae, Elsen, Kavukcuoglu, and Simonyan]{clark_unified}
Aidan Clark, Diego De~Las~Casas, Aurelia Guy, Arthur Mensch, Michela Paganini, Jordan Hoffmann, Bogdan Damoc, Blake Hechtman, Trevor Cai, Sebastian Borgeaud, George~Bm Van Den~Driessche, Eliza Rutherford, Tom Hennigan, Matthew~J Johnson, Albin Cassirer, Chris Jones, Elena Buchatskaya, David Budden, Laurent Sifre, Simon Osindero, Oriol Vinyals, Marc'Aurelio Ranzato, Jack Rae, Erich Elsen, Koray Kavukcuoglu, and Karen Simonyan.
\newblock Unified scaling laws for routed language models.
\newblock In Kamalika Chaudhuri, Stefanie Jegelka, Le~Song, Csaba Szepesvari, Gang Niu, and Sivan Sabato (eds.), \emph{Proceedings of the 39th International Conference on Machine Learning}, volume 162 of \emph{Proceedings of Machine Learning Research}, pp.\  4057--4086. PMLR, 17--23 Jul 2022.
\newblock URL \url{https://proceedings.mlr.press/v162/clark22a.html}.

\bibitem[Dai et~al.(2024)Dai, Deng, Zhao, Xu, Gao, Chen, Li, Zeng, Yu, Wu, Xie, Li, Huang, Luo, Ruan, Sui, and Liang]{dai2024deepseekmoe}
Damai Dai, Chengqi Deng, Chenggang Zhao, R.~X. Xu, Huazuo Gao, Deli Chen, Jiashi Li, Wangding Zeng, Xingkai Yu, Y.~Wu, Zhenda Xie, Y.~K. Li, Panpan Huang, Fuli Luo, Chong Ruan, Zhifang Sui, and Wenfeng Liang.
\newblock Deepseekmoe: Towards ultimate expert specialization in mixture-of-experts language models, 2024.

\bibitem[DeepSeek-AI et~al.(2025)DeepSeek-AI, Liu, Feng, Xue, Wang, Wu, Lu, Zhao, Deng, Zhang, Ruan, Dai, Guo, Yang, Chen, Ji, Li, Lin, Dai, Luo, Hao, Chen, Li, Zhang, Bao, Xu, Wang, Zhang, Ding, Xin, Gao, Li, Qu, Cai, Liang, Guo, Ni, Li, Wang, Chen, Chen, Yuan, Qiu, Li, Song, Dong, Hu, Gao, Guan, Huang, Yu, Wang, Zhang, Xu, Xia, Zhao, Wang, Zhang, Li, Wang, Zhang, Zhang, Tang, Li, Tian, Huang, Wang, Zhang, Wang, Zhu, Chen, Du, Chen, Jin, Ge, Zhang, Pan, Wang, Xu, Zhang, Chen, Li, Lu, Zhou, Chen, Wu, Ye, Ye, Ma, Wang, Zhou, Yu, Zhou, Pan, Wang, Yun, Pei, Sun, Xiao, Zeng, Zhao, An, Liu, Liang, Gao, Yu, Zhang, Li, Jin, Wang, Bi, Liu, Wang, Shen, Chen, Zhang, Chen, Nie, Sun, Wang, Cheng, Liu, Xie, Liu, Yu, Song, Shan, Zhou, Yang, Li, Su, Lin, Li, Wang, Wei, Zhu, Zhang, Xu, Xu, Huang, Li, Zhao, Sun, Li, Wang, Yu, Zheng, Zhang, Shi, Xiong, He, Tang, Piao, Wang, Tan, Ma, Liu, Guo, Wu, Ou, Zhu, Wang, Gong, Zou, He, Zha, Xiong, Ma, Yan, Luo, You, Liu, Zhou, Wu, Ren, Ren, Sha, Fu, Xu, Huang, Zhang, Xie, Zhang, Hao,
  Gou, Ma, Yan, Shao, Xu, Wu, Zhang, Li, Gu, Zhu, Liu, Li, Xie, Song, Gao, and Pan]{deepseekai2025deepseekv3technicalreport}
DeepSeek-AI, Aixin Liu, Bei Feng, Bing Xue, Bingxuan Wang, Bochao Wu, Chengda Lu, Chenggang Zhao, Chengqi Deng, Chenyu Zhang, Chong Ruan, Damai Dai, Daya Guo, Dejian Yang, Deli Chen, Dongjie Ji, Erhang Li, Fangyun Lin, Fucong Dai, Fuli Luo, Guangbo Hao, Guanting Chen, Guowei Li, H.~Zhang, Han Bao, Hanwei Xu, Haocheng Wang, Haowei Zhang, Honghui Ding, Huajian Xin, Huazuo Gao, Hui Li, Hui Qu, J.~L. Cai, Jian Liang, Jianzhong Guo, Jiaqi Ni, Jiashi Li, Jiawei Wang, Jin Chen, Jingchang Chen, Jingyang Yuan, Junjie Qiu, Junlong Li, Junxiao Song, Kai Dong, Kai Hu, Kaige Gao, Kang Guan, Kexin Huang, Kuai Yu, Lean Wang, Lecong Zhang, Lei Xu, Leyi Xia, Liang Zhao, Litong Wang, Liyue Zhang, Meng Li, Miaojun Wang, Mingchuan Zhang, Minghua Zhang, Minghui Tang, Mingming Li, Ning Tian, Panpan Huang, Peiyi Wang, Peng Zhang, Qiancheng Wang, Qihao Zhu, Qinyu Chen, Qiushi Du, R.~J. Chen, R.~L. Jin, Ruiqi Ge, Ruisong Zhang, Ruizhe Pan, Runji Wang, Runxin Xu, Ruoyu Zhang, Ruyi Chen, S.~S. Li, Shanghao Lu, Shangyan Zhou, Shanhuang
  Chen, Shaoqing Wu, Shengfeng Ye, Shengfeng Ye, Shirong Ma, Shiyu Wang, Shuang Zhou, Shuiping Yu, Shunfeng Zhou, Shuting Pan, T.~Wang, Tao Yun, Tian Pei, Tianyu Sun, W.~L. Xiao, Wangding Zeng, Wanjia Zhao, Wei An, Wen Liu, Wenfeng Liang, Wenjun Gao, Wenqin Yu, Wentao Zhang, X.~Q. Li, Xiangyue Jin, Xianzu Wang, Xiao Bi, Xiaodong Liu, Xiaohan Wang, Xiaojin Shen, Xiaokang Chen, Xiaokang Zhang, Xiaosha Chen, Xiaotao Nie, Xiaowen Sun, Xiaoxiang Wang, Xin Cheng, Xin Liu, Xin Xie, Xingchao Liu, Xingkai Yu, Xinnan Song, Xinxia Shan, Xinyi Zhou, Xinyu Yang, Xinyuan Li, Xuecheng Su, Xuheng Lin, Y.~K. Li, Y.~Q. Wang, Y.~X. Wei, Y.~X. Zhu, Yang Zhang, Yanhong Xu, Yanhong Xu, Yanping Huang, Yao Li, Yao Zhao, Yaofeng Sun, Yaohui Li, Yaohui Wang, Yi~Yu, Yi~Zheng, Yichao Zhang, Yifan Shi, Yiliang Xiong, Ying He, Ying Tang, Yishi Piao, Yisong Wang, Yixuan Tan, Yiyang Ma, Yiyuan Liu, Yongqiang Guo, Yu~Wu, Yuan Ou, Yuchen Zhu, Yuduan Wang, Yue Gong, Yuheng Zou, Yujia He, Yukun Zha, Yunfan Xiong, Yunxian Ma, Yuting Yan, Yuxiang
  Luo, Yuxiang You, Yuxuan Liu, Yuyang Zhou, Z.~F. Wu, Z.~Z. Ren, Zehui Ren, Zhangli Sha, Zhe Fu, Zhean Xu, Zhen Huang, Zhen Zhang, Zhenda Xie, Zhengyan Zhang, Zhewen Hao, Zhibin Gou, Zhicheng Ma, Zhigang Yan, Zhihong Shao, Zhipeng Xu, Zhiyu Wu, Zhongyu Zhang, Zhuoshu Li, Zihui Gu, Zijia Zhu, Zijun Liu, Zilin Li, Ziwei Xie, Ziyang Song, Ziyi Gao, and Zizheng Pan.
\newblock Deepseek-v3 technical report, 2025.
\newblock URL \url{https://arxiv.org/abs/2412.19437}.

\bibitem[Dey et~al.(2025)Dey, Zhang, Noci, Li, Bordelon, Bergsma, Pehlevan, Hanin, and Hestness]{completeP}
Nolan Dey, Bin~Claire Zhang, Lorenzo Noci, Mufan Li, Blake Bordelon, Shane Bergsma, Cengiz Pehlevan, Boris Hanin, and Joel Hestness.
\newblock Don't be lazy: Completep enables compute-efficient deep transformers, 2025.
\newblock URL \url{https://arxiv.org/abs/2505.01618}.

\bibitem[Du et~al.(2022)Du, Huang, Dai, Tong, Lepikhin, Xu, Krikun, Zhou, Yu, Firat, Zoph, Fedus, Bosma, Zhou, Wang, Wang, Webster, Pellat, Robinson, Meier-Hellstern, Duke, Dixon, Zhang, Le, Wu, Chen, and Cui]{du2022glam}
Nan Du, Yanping Huang, Andrew~M. Dai, Simon Tong, Dmitry Lepikhin, Yuanzhong Xu, Maxim Krikun, Yanqi Zhou, Adams~Wei Yu, Orhan Firat, Barret Zoph, Liam Fedus, Maarten Bosma, Zongwei Zhou, Tao Wang, Yu~Emma Wang, Kellie Webster, Marie Pellat, Kevin Robinson, Kathleen Meier-Hellstern, Toju Duke, Lucas Dixon, Kun Zhang, Quoc~V Le, Yonghui Wu, Zhifeng Chen, and Claire Cui.
\newblock Glam: Efficient scaling of language models with mixture-of-experts, 2022.

\bibitem[Everett et~al.(2024)Everett, Xiao, Wortsman, Alemi, Novak, Liu, Gur, Sohl-Dickstein, Kaelbling, Lee, and Pennington]{scaling_exponents}
Katie Everett, Lechao Xiao, Mitchell Wortsman, Alexander~A. Alemi, Roman Novak, Peter~J. Liu, Izzeddin Gur, Jascha Sohl-Dickstein, Leslie~Pack Kaelbling, Jaehoon Lee, and Jeffrey Pennington.
\newblock Scaling exponents across parameterizations and optimizers, 2024.
\newblock URL \url{https://arxiv.org/abs/2407.05872}.

\bibitem[Fedus et~al.(2022)Fedus, Zoph, and Shazeer]{fedus2022switch}
William Fedus, Barret Zoph, and Noam Shazeer.
\newblock Switch transformers: Scaling to trillion parameter models with simple and efficient sparsity.
\newblock \emph{Journal of Machine Learning Research}, 23\penalty0 (120):\penalty0 1--39, 2022.

\bibitem[Jacobs et~al.(1991)Jacobs, Jordan, Nowlan, and Hinton]{moe1991}
Robert~A. Jacobs, Michael~I. Jordan, Steven~J. Nowlan, and Geoffrey~E. Hinton.
\newblock Adaptive mixtures of local experts.
\newblock \emph{Neural Computation}, 3\penalty0 (1):\penalty0 79--87, 1991.
\newblock \doi{10.1162/neco.1991.3.1.79}.

\bibitem[Lepikhin et~al.(2020)Lepikhin, Lee, Xu, Chen, Firat, Huang, Krikun, Shazeer, Sepassi, Tucker, and Zhou]{lepikhin2020gshard}
Denis Lepikhin, HyoukJoong Lee, Yuanzhong Xu, Dehao Chen, Orhan Firat, Yanping Huang, Maxim Krikun, Noam Shazeer, Ramin Sepassi, Paul Tucker, and Colin Zhou.
\newblock Gshard: Scaling giant models with conditional computation and automatic sharding.
\newblock In \emph{Proceedings of the 37th International Conference on Machine Learning (ICML)}, 2020.
\newblock URL \url{https://arxiv.org/abs/2006.16668}.

\bibitem[Loshchilov \& Hutter(2019)Loshchilov and Hutter]{loshchilov2019decoupled}
Ilya Loshchilov and Frank Hutter.
\newblock Decoupled weight decay regularization, 2019.

\bibitem[Ludziejewski et~al.(2024)Ludziejewski, Krajewski, Adamczewski, Pi\'{o}ro, Krutul, Antoniak, Ciebiera, Kr\'{o}l, Odrzyg\'{o}\'{z}d\'{z}, Sankowski, Cygan, and Jaszczur]{ludziejewskiscaling}
Jan Ludziejewski, Jakub Krajewski, Kamil Adamczewski, Maciej Pi\'{o}ro, Micha{\l} Krutul, Szymon Antoniak, Kamil Ciebiera, Krystian Kr\'{o}l, Tomasz Odrzyg\'{o}\'{z}d\'{z}, Piotr Sankowski, Marek Cygan, and Sebastian Jaszczur.
\newblock Scaling laws for fine-grained mixture of experts.
\newblock In Ruslan Salakhutdinov, Zico Kolter, Katherine Heller, Adrian Weller, Nuria Oliver, Jonathan Scarlett, and Felix Berkenkamp (eds.), \emph{Proceedings of the 41st International Conference on Machine Learning}, volume 235 of \emph{Proceedings of Machine Learning Research}, pp.\  33270--33288. PMLR, 21--27 Jul 2024.
\newblock URL \url{https://proceedings.mlr.press/v235/ludziejewski24a.html}.

\bibitem[Ludziejewski et~al.(2025)Ludziejewski, Pióro, Krajewski, Stefaniak, Krutul, Małaśnicki, Cygan, Sankowski, Adamczewski, Miłoś, and Jaszczur]{ludziejewski2025jointmoescalinglaws}
Jan Ludziejewski, Maciej Pióro, Jakub Krajewski, Maciej Stefaniak, Michał Krutul, Jan Małaśnicki, Marek Cygan, Piotr Sankowski, Kamil Adamczewski, Piotr Miłoś, and Sebastian Jaszczur.
\newblock Joint moe scaling laws: Mixture of experts can be memory efficient, 2025.
\newblock URL \url{https://arxiv.org/abs/2502.05172}.

\bibitem[OpenAI et~al.(2025)OpenAI, :, Agarwal, Ahmad, Ai, Altman, Applebaum, Arbus, Arora, Bai, Baker, Bao, Barak, Bennett, Bertao, Brett, Brevdo, Brockman, Bubeck, Chang, Chen, Chen, Cheung, Clark, Cook, Dukhan, Dvorak, Fives, Fomenko, Garipov, Georgiev, Glaese, Gogineni, Goucher, Gross, Guzman, Hallman, Hehir, Heidecke, Helyar, Hu, Huet, Huh, Jain, Johnson, Koch, Kofman, Kundel, Kwon, Kyrylov, Le, Leclerc, Lennon, Lessans, Lezcano-Casado, Li, Li, Lin, Liss, Lily, Liu, Liu, Lu, Lu, Martinovic, McCallum, McGrath, McKinney, McLaughlin, Mei, Mostovoy, Mu, Myles, Neitz, Nichol, Pachocki, Paino, Palmie, Pantuliano, Parascandolo, Park, Pathak, Paz, Peran, Pimenov, Pokrass, Proehl, Qiu, Raila, Raso, Ren, Richardson, Robinson, Rotsted, Salman, Sanjeev, Schwarzer, Sculley, Sikchi, Simon, Singhal, Song, Stuckey, Sun, Tillet, Toizer, Tsimpourlas, Vyas, Wallace, Wang, Wang, Watkins, Weil, Wendling, Whinnery, Whitney, Wong, Yang, Yang, Yasunaga, Ying, Zaremba, Zhan, Zhang, Zhang, Zhang, and
  Zhao]{openai2025gptoss120bgptoss20bmodel}
OpenAI, :, Sandhini Agarwal, Lama Ahmad, Jason Ai, Sam Altman, Andy Applebaum, Edwin Arbus, Rahul~K. Arora, Yu~Bai, Bowen Baker, Haiming Bao, Boaz Barak, Ally Bennett, Tyler Bertao, Nivedita Brett, Eugene Brevdo, Greg Brockman, Sebastien Bubeck, Che Chang, Kai Chen, Mark Chen, Enoch Cheung, Aidan Clark, Dan Cook, Marat Dukhan, Casey Dvorak, Kevin Fives, Vlad Fomenko, Timur Garipov, Kristian Georgiev, Mia Glaese, Tarun Gogineni, Adam Goucher, Lukas Gross, Katia~Gil Guzman, John Hallman, Jackie Hehir, Johannes Heidecke, Alec Helyar, Haitang Hu, Romain Huet, Jacob Huh, Saachi Jain, Zach Johnson, Chris Koch, Irina Kofman, Dominik Kundel, Jason Kwon, Volodymyr Kyrylov, Elaine~Ya Le, Guillaume Leclerc, James~Park Lennon, Scott Lessans, Mario Lezcano-Casado, Yuanzhi Li, Zhuohan Li, Ji~Lin, Jordan Liss, Lily, Liu, Jiancheng Liu, Kevin Lu, Chris Lu, Zoran Martinovic, Lindsay McCallum, Josh McGrath, Scott McKinney, Aidan McLaughlin, Song Mei, Steve Mostovoy, Tong Mu, Gideon Myles, Alexander Neitz, Alex Nichol, Jakub
  Pachocki, Alex Paino, Dana Palmie, Ashley Pantuliano, Giambattista Parascandolo, Jongsoo Park, Leher Pathak, Carolina Paz, Ludovic Peran, Dmitry Pimenov, Michelle Pokrass, Elizabeth Proehl, Huida Qiu, Gaby Raila, Filippo Raso, Hongyu Ren, Kimmy Richardson, David Robinson, Bob Rotsted, Hadi Salman, Suvansh Sanjeev, Max Schwarzer, D.~Sculley, Harshit Sikchi, Kendal Simon, Karan Singhal, Yang Song, Dane Stuckey, Zhiqing Sun, Philippe Tillet, Sam Toizer, Foivos Tsimpourlas, Nikhil Vyas, Eric Wallace, Xin Wang, Miles Wang, Olivia Watkins, Kevin Weil, Amy Wendling, Kevin Whinnery, Cedric Whitney, Hannah Wong, Lin Yang, Yu~Yang, Michihiro Yasunaga, Kristen Ying, Wojciech Zaremba, Wenting Zhan, Cyril Zhang, Brian Zhang, Eddie Zhang, and Shengjia Zhao.
\newblock gpt-oss-120b \& gpt-oss-20b model card, 2025.
\newblock URL \url{https://arxiv.org/abs/2508.10925}.

\bibitem[Radford et~al.(2018)Radford, Narasimhan, Salimans, and Sutskever]{radford2018improving}
Alec Radford, Karthik Narasimhan, Tim Salimans, and Ilya Sutskever.
\newblock Improving language understanding by generative pre-training.
\newblock 2018.

\bibitem[Raffel et~al.(2020)Raffel, Shazeer, Roberts, Lee, Narang, Matena, Zhou, Li, and Liu]{raffel2020exploring}
Colin Raffel, Noam Shazeer, Adam Roberts, Katherine Lee, Sharan Narang, Michael Matena, Yanqi Zhou, Wei Li, and Peter~J. Liu.
\newblock Exploring the limits of transfer learning with a unified text-to-text transformer.
\newblock In \emph{Proceedings of the 37th International Conference on Machine Learning}, pp.\  5541--5551. PMLR, 2020.
\newblock URL \url{https://arxiv.org/abs/1910.10683}.

\bibitem[Shazeer et~al.(2017)Shazeer, Mirhoseini, Maziarz, Davis, Le, Hinton, and Dean]{shazeer2017outrageously}
Noam Shazeer, Azalia Mirhoseini, Krzysztof Maziarz, Andy Davis, Quoc Le, Geoffrey Hinton, and Jeff Dean.
\newblock Outrageously large neural networks: The sparsely-gated mixture-of-experts layer, 2017.

\bibitem[Team et~al.(2025)Team, Bai, Bao, Chen, Chen, Chen, Chen, Chen, Chen, Chen, Chen, Cui, Ding, Dong, Du, Du, Du, Du, Fan, Feng, Fu, Gao, Gao, Gao, Gao, Gu, Guan, Guo, Guo, Hu, Hao, He, He, He, Hong, Hu, Hu, Huang, Huang, Huang, Jiang, Jiang, Jin, Kang, Lai, Li, Li, Li, Li, Li, Li, Li, Li, Li, Lin, Lin, Lin, Liu, Liu, Liu, Liu, Liu, Liu, Liu, Liu, Liu, Liu, Liu, Liu, Liu, Liu, Liu, Lu, Lu, Ma, Ma, Ma, Mao, Mei, Men, Miao, Pan, Peng, Qin, Qu, Shang, Shi, Shi, Song, Su, Su, Sun, Sung, Tang, Tao, Teng, Wang, Wang, Wang, Wang, Wang, Wang, Wang, Wang, Wang, Wang, Wang, Wang, Wang, Wang, Wang, Wang, Wang, Wei, Wei, Wu, Wu, Wu, Xiao, Xie, Xiong, Xu, Xu, Xu, Xu, Xu, Xu, Xu, Xu, Xu, Xu, Yan, Yan, Yang, Yang, Yang, Yang, Yang, Yao, Yao, Ye, Ye, Yin, Yu, Yuan, Yuan, Yuan, Zhan, Zhang, Zhang, Zhang, Zhang, Zhang, Zhang, Zhang, Zhang, Zhang, Zhang, Zhang, Zhao, Zhao, Zheng, Zheng, Zhou, Zhou, Zhou, Zhu, Zhuang, and Zu]{kimiteam2025kimik2openagentic}
Kimi Team, Yifan Bai, Yiping Bao, Guanduo Chen, Jiahao Chen, Ningxin Chen, Ruijue Chen, Yanru Chen, Yuankun Chen, Yutian Chen, Zhuofu Chen, Jialei Cui, Hao Ding, Mengnan Dong, Angang Du, Chenzhuang Du, Dikang Du, Yulun Du, Yu~Fan, Yichen Feng, Kelin Fu, Bofei Gao, Hongcheng Gao, Peizhong Gao, Tong Gao, Xinran Gu, Longyu Guan, Haiqing Guo, Jianhang Guo, Hao Hu, Xiaoru Hao, Tianhong He, Weiran He, Wenyang He, Chao Hong, Yangyang Hu, Zhenxing Hu, Weixiao Huang, Zhiqi Huang, Zihao Huang, Tao Jiang, Zhejun Jiang, Xinyi Jin, Yongsheng Kang, Guokun Lai, Cheng Li, Fang Li, Haoyang Li, Ming Li, Wentao Li, Yanhao Li, Yiwei Li, Zhaowei Li, Zheming Li, Hongzhan Lin, Xiaohan Lin, Zongyu Lin, Chengyin Liu, Chenyu Liu, Hongzhang Liu, Jingyuan Liu, Junqi Liu, Liang Liu, Shaowei Liu, T.~Y. Liu, Tianwei Liu, Weizhou Liu, Yangyang Liu, Yibo Liu, Yiping Liu, Yue Liu, Zhengying Liu, Enzhe Lu, Lijun Lu, Shengling Ma, Xinyu Ma, Yingwei Ma, Shaoguang Mao, Jie Mei, Xin Men, Yibo Miao, Siyuan Pan, Yebo Peng, Ruoyu Qin, Bowen Qu, Zeyu
  Shang, Lidong Shi, Shengyuan Shi, Feifan Song, Jianlin Su, Zhengyuan Su, Xinjie Sun, Flood Sung, Heyi Tang, Jiawen Tao, Qifeng Teng, Chensi Wang, Dinglu Wang, Feng Wang, Haiming Wang, Jianzhou Wang, Jiaxing Wang, Jinhong Wang, Shengjie Wang, Shuyi Wang, Yao Wang, Yejie Wang, Yiqin Wang, Yuxin Wang, Yuzhi Wang, Zhaoji Wang, Zhengtao Wang, Zhexu Wang, Chu Wei, Qianqian Wei, Wenhao Wu, Xingzhe Wu, Yuxin Wu, Chenjun Xiao, Xiaotong Xie, Weimin Xiong, Boyu Xu, Jing Xu, Jinjing Xu, L.~H. Xu, Lin Xu, Suting Xu, Weixin Xu, Xinran Xu, Yangchuan Xu, Ziyao Xu, Junjie Yan, Yuzi Yan, Xiaofei Yang, Ying Yang, Zhen Yang, Zhilin Yang, Zonghan Yang, Haotian Yao, Xingcheng Yao, Wenjie Ye, Zhuorui Ye, Bohong Yin, Longhui Yu, Enming Yuan, Hongbang Yuan, Mengjie Yuan, Haobing Zhan, Dehao Zhang, Hao Zhang, Wanlu Zhang, Xiaobin Zhang, Yangkun Zhang, Yizhi Zhang, Yongting Zhang, Yu~Zhang, Yutao Zhang, Yutong Zhang, Zheng Zhang, Haotian Zhao, Yikai Zhao, Huabin Zheng, Shaojie Zheng, Jianren Zhou, Xinyu Zhou, Zaida Zhou, Zhen Zhu,
  Weiyu Zhuang, and Xinxing Zu.
\newblock Kimi k2: Open agentic intelligence, 2025.
\newblock URL \url{https://arxiv.org/abs/2507.20534}.

\bibitem[Vaswani et~al.(2017)Vaswani, Shazeer, Parmar, Uszkoreit, Jones, Gomez, Kaiser, and Polosukhin]{vaswani2017attention}
Ashish Vaswani, Noam Shazeer, Niki Parmar, Jakob Uszkoreit, Llion Jones, Aidan~N Gomez, \L~ukasz Kaiser, and Illia Polosukhin.
\newblock Attention is all you need.
\newblock In I.~Guyon, U.~Von Luxburg, S.~Bengio, H.~Wallach, R.~Fergus, S.~Vishwanathan, and R.~Garnett (eds.), \emph{Advances in Neural Information Processing Systems}, volume~30. Curran Associates, Inc., 2017.
\newblock URL \url{https://proceedings.neurips.cc/paper_files/paper/2017/file/3f5ee243547dee91fbd053c1c4a845aa-Paper.pdf}.

\bibitem[Yang(2021)]{yang2021feature}
Greg Yang.
\newblock Tp4: Feature learning in infinite-width neural networks.
\newblock In \emph{International Conference on Learning Representations (ICLR)}, 2021.
\newblock URL \url{https://openreview.net/forum?id=K19h3z-4Z}.

\bibitem[Yang et~al.(2022)Yang, Hu, et~al.]{yang2022tuning}
Greg Yang, Edward~J Hu, et~al.
\newblock Tp5: Tuning large neural networks via zero-shot hyperparameter transfer.
\newblock \emph{arXiv preprint arXiv:2203.03466}, 2022.
\newblock URL \url{https://arxiv.org/abs/2203.03466}.

\bibitem[Yang et~al.(2023)Yang, Yu, Zhu, and Hayou]{tp_6}
Greg Yang, Dingli Yu, Chen Zhu, and Soufiane Hayou.
\newblock Tensor programs vi: Feature learning in infinite-depth neural networks, 2023.
\newblock URL \url{https://arxiv.org/abs/2310.02244}.

\bibitem[Yang et~al.(2024)Yang, Simon, and Bernstein]{spectral_condition}
Greg Yang, James~B. Simon, and Jeremy Bernstein.
\newblock A spectral condition for feature learning, 2024.
\newblock URL \url{https://arxiv.org/abs/2310.17813}.

\bibitem[Zhou et~al.(2022)Zhou, Lei, Liu, Du, Huang, Zhao, Dai, Chen, Le, and Laudon]{zhou2022mixtureofexperts}
Yanqi Zhou, Tao Lei, Hanxiao Liu, Nan Du, Yanping Huang, Vincent Zhao, Andrew Dai, Zhifeng Chen, Quoc Le, and James Laudon.
\newblock Mixture-of-experts with expert choice routing, 2022.

\bibitem[Zoph et~al.(2022)Zoph, Bello, Kumar, Du, Huang, Dean, Shazeer, and Fedus]{zoph2022st}
Barret Zoph, Irwan Bello, Sameer Kumar, Nan Du, Yanping Huang, Jeff Dean, Noam Shazeer, and William Fedus.
\newblock St-moe: Designing stable and transferable sparse expert models.
\newblock \emph{arXiv preprint arXiv:2202.08906}, 2022.

\end{thebibliography}
\bibliographystyle{iclr2026_conference}

\newpage
\appendix

\section{MuP for Dense Transformer}\label{apdx:dense}

In this section, we verify the findings from \citet{yang2022tuning} by implementing the $\mu$P for dense models. In contrast to standard parameterization, for $\mu$P, the optimal learning rate transfers between different model widths.

\begin{figure}
    \centering
    \includegraphics[width=\linewidth]{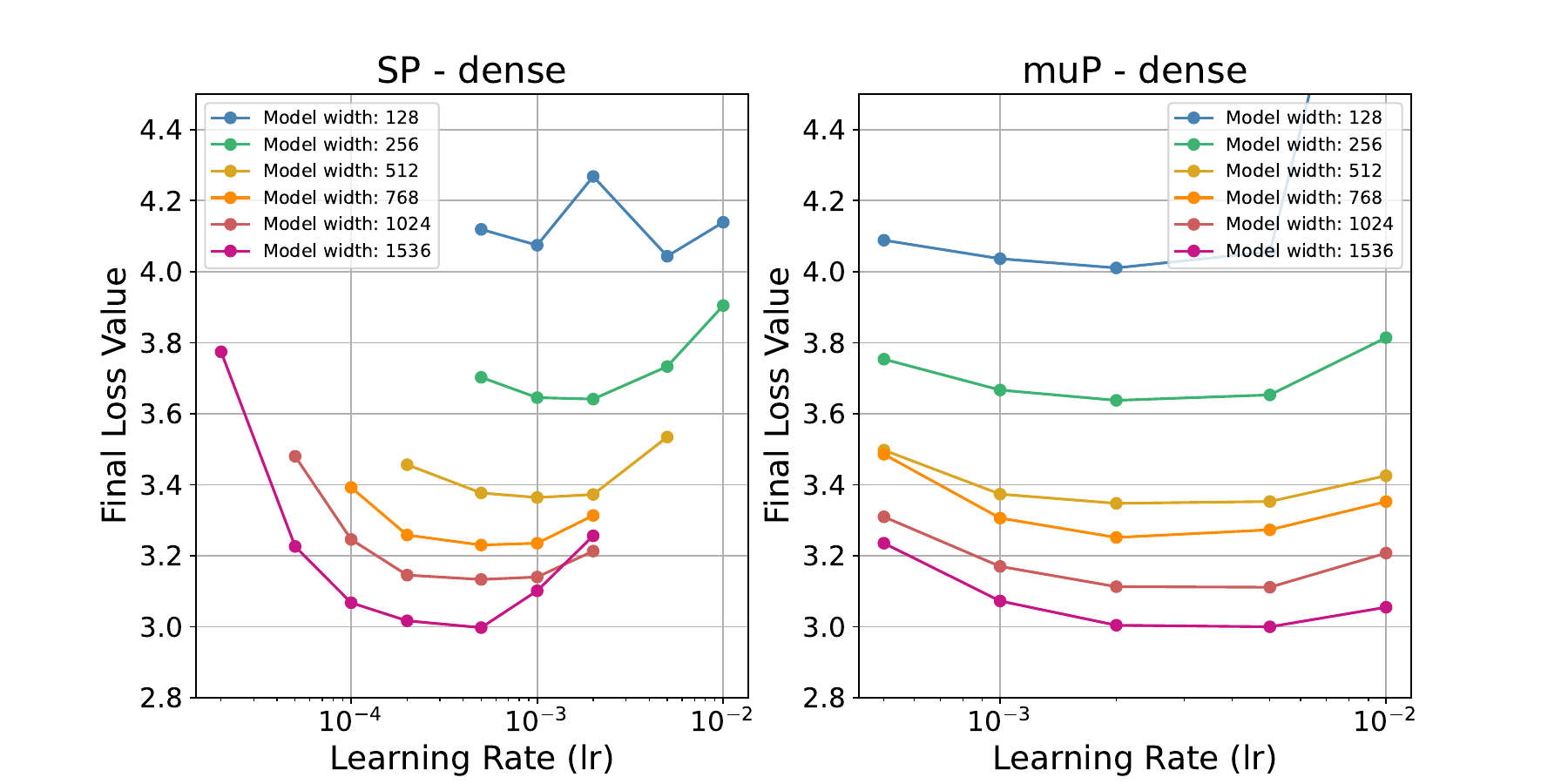}
    \label{fig:lr_plot_dense}
    \caption{This figure shows experiments on learning rate transfer in \textit{dense} models. SP on the left has a different optimal learning rate for each model width, while $\mu$P has a mostly stable optimum (with slight upward shift, same as TP5 and MoE sweeps).}
        \label{fig:mup_dense}
\end{figure}

\section{Expert–gradient covariance lemma}
\label{sec:eg-cov}

We now formalize the intuition that in a µP-parametrized Switch-MoE block, each active expert’s forward activation correlates with its backward gradient at order $\Theta(1/n)$, and that the router’s gradient norm remains $\Theta(1)$ both immediately after initialization and again after one µ-SGD/Adam update.

\begin{lemma}[Expert–gradient covariance]
\label{lem:eg-cov}
Let an $L$‐block Switch‐MoE be µP‐parametrized with width $n\to\infty$, a fixed number of experts $n_{\mathrm{experts}}=O(1)$, and fixed top-$k=O(1)$.  For each block $\ell$, define
\begin{equation}
\label{eq:block-def}
\begin{aligned}
y^{(\ell)} &= \sum_{e=1}^{n_{\mathrm{experts}}}
    R^{(\ell)}_e\bigl(x^{(\ell)}\bigr)\,
    E^{(\ell)}_e\bigl(x^{(\ell)}\bigr),\\
\delta^{(\ell)} &= \nabla_{y^{(\ell)}}L \in \R^n.
\end{aligned}
\end{equation}
Assume the inductive hypothesis
\begin{equation}
\label{eq:H_l}
\begin{aligned}
\frac1n\sum_{j=1}^n\bigl(\delta^{(\ell)}_j\bigr)^2
  &= \Theta(n^{-2})
  \;\Longrightarrow\;
  \delta^{(\ell)}_j = \Theta(n^{-1})
  \quad\text{for typical }j.
\end{aligned}
\tag{H$_\ell$}
\end{equation}
Then for every block $\ell$ and any active expert $e$, at both
\begin{equation*}
\label{eq:timepoints}
\begin{aligned}
t=0 &:\ \text{immediately after initialization},\\
t=1 &:\ \text{after one }\mu\text{-SGD/Adam step},
\end{aligned}
\end{equation*}
we have
\begin{equation}
\label{eq:lemma-result}
\begin{aligned}
\operatorname{Cov}\bigl(E^{(\ell)}_{e,j},\,\delta^{(\ell)}_j\bigr)
  &= \Theta(n^{-1}),\\
\bigl\|\nabla_{r^{(\ell)}}L\bigr\|_2
  &= \Theta(1).
\end{aligned}
\end{equation}
\end{lemma}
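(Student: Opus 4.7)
The plan is to invoke the Tensor Programs V (TP5) framework to track joint forward/backward Z-variables across blocks, and to proceed by reverse induction on $\ell$ (from the output layer toward the input), treating the two time points $t\in\{0,1\}$ separately. At each block, $y^{(\ell)}=\sum_e R^{(\ell)}_e(x^{(\ell)})\,E^{(\ell)}_e(x^{(\ell)})$ and $\delta^{(\ell)}$ is obtained by back-propagating through every subsequent layer. The TP5 master theorem assigns a joint limiting distribution to $(E^{(\ell)}_{e,j},\delta^{(\ell)}_j)$ in the width limit, and the lemma reduces to showing that the cross-moment $\mathbb{E}[Z_{E_e}Z_\delta]$ is $\Theta(n^{-1})$ (equivalently, a Pearson correlation $\Theta(1)$) at both time points, at which point the router-gradient claim follows by summation.

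For $t=0$ the base case is the last MoE block, where $\delta^{(L)}$ is a deterministic function of $y^{(L)}$ through the unembedding and the loss Jacobian; the cross-moment is then an explicit nonzero $\Theta(n^{-1})$ quantity, picking up exactly the $1/n$ unembedding multiplier of Table~1. For the inductive step, propagating $\delta^{(\ell+1)}$ back to $\delta^{(\ell)}$ passes through attention/MLP blocks whose backward pass transposes the same random matrices used in the forward pass; by the Gaussian integration-by-parts identity central to TP5, this re-use produces a $\Theta(1)$ correlation between the backward Z-variable and $y^{(\ell)}$, hence with each active $E^{(\ell)}_e$ via $y^{(\ell)}=\sum_e R_e E^{(\ell)}_e$. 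Given the covariance statement, the router-gradient bound follows from $\nabla_{r_e}L=s'_e\,(E_e^\top\delta)\,x^\top$: the correlated law of large numbers quoted in Section~3.4 yields $E_e^\top\delta=\Theta(1)\cdot\Theta(n^{-1})\cdot\Theta(1)\cdot n=\Theta(1)$, and since $x$ has $\Theta(1)$ typical entries, $\nabla_{r_e}L$ has $\Theta(1)$ typical entries as claimed.

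For $t=1$ I would write each updated weight as $W^{(1)}=W^{(0)}+\Delta W$ with $\Delta W$ the $\mu$-Adam (or $\mu$-SGD) update, and use the initial-plus-update Z-variable decomposition that TP5 already employs to handle Adam. Section~3.4 has established that after one step hidden activations remain $\Theta(1)$ and hidden-layer gradients remain $\Theta(n^{-1})$, so the marginal scales of $E^{(\ell)}_e$ and $\delta^{(\ell)}$ are unchanged. The correlation is preserved because $\Delta W$ is itself constructed from the $t=0$ forward/backward signals, which were already jointly correlated; the new cross-moments it contributes enter at the same $\Theta(n^{-1})$ order rather than cancelling the $t=0$ covariance, and this can be verified by propagating the decomposition through one further backward pass.

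The principal obstacle is the top-$k$ router, which is non-differentiable and is not among the TP5 primitives. My plan is to exploit the standing hypothesis $n_{\text{experts}},k=O(1)$, so that $Rx\in\R^{n_{\text{experts}}}$ is an $O(1)$-dimensional object on which the top-$k$/softmax combination acts as a piecewise-smooth $O(1)$-input/output operation, outside the width limit; conditioning on each of the constantly many active patterns reduces the problem to standard TP5 primitives plus a finite case analysis. A secondary nuisance at $t=1$ is Adam's coordinate-wise sign normalisation, which requires a mild non-degeneracy assumption on the loss to rule out exact cancellation in $E_e^\top\delta$, exactly analogous to the hypothesis invoked in the Adam version of the main theorem of TP5.
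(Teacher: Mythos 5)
Your proposal follows essentially the same route as the paper's proof: both hinge on Gaussian integration by parts (Stein's lemma) to extract the $\Theta(n^{-1})$ cross-moment between an active expert's output coordinate and the backpropagated gradient, both obtain the router-gradient bound by summing these covariances over the $n$ coordinates via a correlated law of large numbers, both treat $t=1$ as an $O(n^{-1})$ perturbation of the $t=0$ quantities, and both close the argument by reverse depth induction anchored in TP5. The only substantive difference is that you explicitly flag the non-differentiability of top-$k$ (handled by conditioning on the finitely many routing patterns) and Adam's sign normalisation, two technical points the paper's proof passes over silently; otherwise the arguments coincide.
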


\begin{proof}
\textbf{Notation.}
For block $\ell$, set
\begin{equation}
\label{eq:notation}
\begin{aligned}
\mathbf e_e
  &= E^{(2)}_{\ell,e}\,
    \mathrm{ReLU}\!\bigl(E^{(1)}_{\ell,e}x^{(\ell)}\bigr)
    \in\R^n,\\
R_e      &= R^{(\ell)}_e\bigl(x^{(\ell)}\bigr)\in\R,\\
\delta   &= \delta^{(\ell)}\in\R^n,\\
J^{(\ell)}
  &= \frac{\partial\,y^{(\ell)}}{\partial\,x^{(\ell)}}
    \in\R^{n\times n},
\end{aligned}
\end{equation}
so that
\(\delta^{(\ell-1)}=(J^{(\ell)})^\top\,\delta\)
and \(J^{(\ell)}_{ij}\sim\mathcal N(0,1/n)\) under µP.

\medskip
\noindent\textbf{Step 1: Stein’s lemma (holds at $t=0$ and $t=1$).}
Fix expert $e$, coordinate $j$, and define
\begin{equation}
\label{eq:stein-setup}
\begin{aligned}
Z            &= \mathbf e_{e,j}\sim\mathcal N(0,\sigma^2),\\
c            &= \sum_{e'\neq e}R_{e'}\,\mathbf e_{e',j},\\
g(z)         &= [L'(y^{(\ell)})]_j,\\
y^{(\ell)}_j &= R_e\,z + c.
\end{aligned}
\end{equation}
Then \(g'(z)=R_e\,[L''(y^{(\ell)})]_j\) and by \eqref{eq:H_l}, \([L''(y^{(\ell)})]_j=\Theta(n^{-1})\).  Thus
\begin{equation}
\label{eq:stein}
\begin{aligned}
\operatorname{Cov}(Z,g(Z))
 &= \sigma^2\,\mathbb{E}\bigl[g'(Z)\bigr]\\
 &= \sigma^2\,R_e\,\mathbb{E}\bigl[L''(y^{(\ell)})\bigr]_j\\
 &= \Theta(1)\cdot\Theta(1)\cdot\Theta(n^{-1})
 = \Theta(n^{-1}).
\end{aligned}
\end{equation}

\noindent\textbf{Remark.} Because by induction the block-$(\ell+1)$ Hessian entries already scale like \(\Theta(n^{-1})\), and passing any such matrix back through a µP-linear layer (whose weights are \(\mathcal{N}(0,1/n)\)) multiplies each term by another \(1/n\) but sums over \(n\) of them, the net effect is still \(\Theta(n^{-1})\). In other words, a \(1/n\) factor per weight-matrix multiplication exactly preserves the \(\Theta(n^{-1})\) scale of \(\bigl[L''(y^{(\ell)})\bigr]_j\).

\medskip
\noindent\textbf{Step 2: Router‐gradient norm (holds at $t=0$ and $t=1$).}
Summing the covariances over $j$,
\begin{equation}
\label{eq:router-norm}
\begin{aligned}
\mathbb{E}\bigl[\mathbf e_e^\top \delta\bigr]
 &= \sum_{j=1}^n \operatorname{Cov}(\mathbf e_{e,j},\,\delta_j)
 = n\cdot\Theta(n^{-1})
 = \Theta(1).
\end{aligned}
\end{equation}
Since \(\operatorname{Var}(\mathbf e_e^\top\delta)=O(n^{-1})\), Chebyshev’s inequality gives
\(\mathbf e_e^\top\delta=\Theta(1)\) with high probability, i.e.\ the second line of \eqref{eq:lemma-result}.

\medskip
\noindent\textbf{Step 3: One‐step update.}
Under µ-SGD/Adam with LR $\eta/n$ on experts and $\eta$ on router, the factors in \eqref{eq:stein} and \eqref{eq:router-norm} change by at most a $(1+O(n^{-1}))$ factor, so both lines of \eqref{eq:lemma-result} hold at $t=1$.

\medskip
\noindent\textbf{Step 4: Depth induction.}
Using \(\delta^{(\ell-1)}=(J^{(\ell)})^\top\delta^{(\ell)}\) and \(J^{(\ell)}_{ij}\sim\mathcal N(0,1/n)\), one shows
\(\tfrac1n\sum_j(\delta^{(\ell-1)}_j)^2=\Theta(n^{-2})\), establishing \((H_{\ell-1})\).  The base case $\ell=L$ is given by Tensor-Programs V; induction completes the proof.
\end{proof}

\section{Experimental setup}
All models in this study are decoder-only Transformers trained on the C4 dataset~\citep{raffel2020exploring}. We use the GPT-2 tokenizer~\citep{radford2018improving} and optimize with AdamW~\citep{loshchilov2019decoupled}. Training follows a cosine decay schedule with linear warmup for the first $1\%$ of steps. Weights are initialized with a normal distribution, as the theory of Tensor Programs assumes \citep{yang2022tuning}. Mixed precision training is applied, with Attention component computed at high precision. The models employ MLP with ReLU activations. MoE models are Switch Transformers \citep{fedus2022switch}. As a standard MoE setup we used $8$ Experts, $1$ of which is activated per token. All models have Attention head dimension of $64$. Two auxiliary losses are used for the Router: a z-loss weighted at $0.001$~\citep{zoph2022st} and load balancing weighted at $0.01$~\citep{fedus2022switch}.

Models for Figure~\ref{fig:lr_plot_moe} are trained for $5.2$B tokens with $16$ Transformer layers ($16\times$ attention + feed-forward). Dense models for Figure~\ref{fig:mup_dense} have $24$ layers and are trained for $16$B tokens. Models for Figure~\ref{fig:other_moe_aspects} have $12$ blocks and are trained for $2.5$B tokens. Both dense and MoE width sweeps use a base width of $128$. In Figure~\ref{fig:lr_plot_moe}, the runs with width $128$ are reused across all three panels since, when model width equals the base width, all parameterizations coincide. Additionally, In Fig.~\ref{fig:lr_plot_moe_small} the models have 8 transformer layers and are trained for 1B tokens. 

Differences in total tokens and layer counts across figures arise from experiments conducted at different times under changing compute budgets. MoE runs are smaller/shorter to enable one additional width; we do not expect these scale differences to affect the qualitative conclusions.

The expert-count and granularity ablations are run under \textit{simple}P at base width $256$ and model width $768$. Because width is fixed within each sweep, the parameterization only induces a global shift in the learning-rate scale and does not affect relative comparisons; the same conclusions would hold under SP or $\mu$P.


\section{More results}

Fig.~\ref{fig:lr_plot_moe_small} presents the smaller models with 8 transformer layers and trained for 1B tokens. The results remain consistent with the findings from the main paper. The performance transfers under both \textit{simple}P and $\mu$P.

\begin{figure*}[h]
    \centering
    \includegraphics[width=\linewidth, trim=3.5cm 0 3.5cm 1cm, clip]{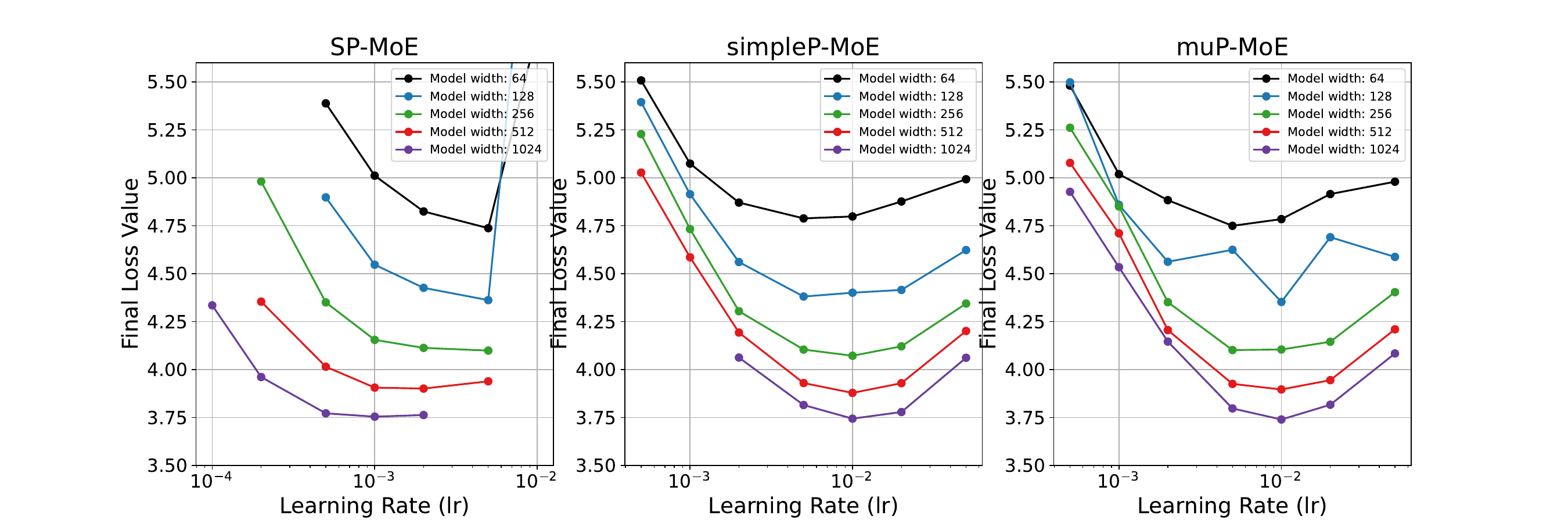}
    \caption{The plots present MoE performance for varying learning rates in the following set-ups: standard parametrization (SP) with no scaling on the left. \textit{simple}P - treating each Expert like a FeedForward layer in the middle. $\mu$P - our theory applied to the MoE layer on the right. While in the case of SP, the optimal learning rate varies with different model sizes, both reparameterizations achieve learning rate transfer across model widths.}
     \label{fig:lr_plot_moe_small}
     \vspace{-0.4cm}
\end{figure*}

\end{document}